\newtheorem{theorem}{Theorem}
\newtheorem{proposition}[theorem]{Proposition}
\title{RoTHP: Rotary
Position Embedding-based Transformer Hawkes Process}
\author{
  Anningzhe Gao$^*$, Shan Dai \thanks{The two authors are equally contributed}\\
  Shenzhen Research Institute of Big Data \\
  \texttt{\{gaoanningzhe, shandai\}@sribd.cn } \\
%\thanks{The two authors are equally contributed}
  %% examples of more authors
  % \And
  %Author3 \\
  %Affiliation \\
  %Univ \\
  %City\\
  %\texttt{email@email} \\
  %% \AND
  %% Coauthor \\
  %% Affiliation \\
  %% Address \\
  %% \texttt{email} \\
  %% \And
  %% Coauthor \\
  %% Affiliation \\
  %% Address \\
  %% \texttt{email} \\
  %% \And
  %% Coauthor \\
  %% Affiliation \\
  %% Address \\
  %% \texttt{email} \\
}
\begin{document}
\maketitle

\begin{abstract}

{Temporal Point Processes (TPPs), especially Hawkes Process are commonly used for modeling asynchronous event sequences data such as financial transactions and user behaviors in social networks. Due to the strong fitting ability of neural networks, various neural Temporal Point Processes are proposed, among which the Neural Hawkes Processes based on self-attention such as Transformer Hawkes Process (THP) achieve distinct performance improvement. Although the THP has gained increasing studies, it still suffers from the {sequence prediction issue}, i.e., training on history sequences and inferencing about the future, which is a prevalent paradigm in realistic sequence analysis tasks. What's more, conventional THP and its variants simply adopt initial sinusoid embedding in transformers, which shows performance sensitivity to temporal change or noise in sequence data analysis by our empirical study. To deal with the problems, we propose a new Rotary Position Embedding-based THP (RoTHP) architecture in this paper. Notably, we show the translation invariance property and {sequence prediction flexibility} of our RoTHP induced by the {relative time embeddings} when coupled with Hawkes process theoretically. Furthermore, we demonstrate empirically that our RoTHP can be better generalized in sequence data scenarios with timestamp translations and in sequence prediction tasks. }
\end{abstract}

% keywords can be removed
\keywords{Hawkes Process \and Transformer \and Rotary
Position Embedding \and Translation Invariance}

\section{Introduction}

Many natural and artificial systems produce a large volume of discrete events occurring in continuous time, for example, the occurrence of crime events, earthquakes, patient visits to hospitals, financial transactions, and user behavior in mobile applications \cite{daley2007introduction}. It is essential to understand and model these complex event dynamics so that accurate analysis, prediction, or intervention can be carried out subsequently depending on the context.

The occurrence of asynchronous event sequences is often modeled by temporal point processes \cite{cox1980point}. They are stochastic processes with (marked) events on the continuous time domain. One special but significant type of temporal point processes is the Hawkes process \cite{hawkes1971spectra}. A considerable amount of studies have used the Hawkes process as a standard tool to model event streams, including:
discovering of patterns in social interactions \cite{guo2015bayesian}, personalized recommendations based on users’ temporal behavior \cite{du2016recurrent}, construction and inference on network structure \cite{yang2013mixture,choi2015constructing} and so on. Hawkes processes usually model the occurrence probability of an event with a so-called intensity function. For those events whose occurrence are influenced by history, the intensity function is specified as history-dependent.
However, the simplified assumptions implied by Hawkes process for the complicated dynamics limit the models’ practicality. As an example, Hawkes process states that all past events should have positive influences on the occurrence of current events. Meanwhile, the lack of various nonlinear operations in traditional Hawkes process also sets an upper limit for Hawkes process’ expressive ability.

Thus there emerge some recent efforts in increasing the expressiveness of the intensity function using nonparametric models like kernel methods and splines \cite{zhou2013learning,wahba1990spline} and neural networks\cite{shchur2021neural}, especially recurrent neural networks \cite{du2016recurrent,mei2017neural} due to the sequence modeling ability. RNN-based Hawkes processes use a recurrent structure to summarise history events, either in the fashion of discrete time \cite{du2016recurrent,xiao2017modeling} or continuous-time \cite{mei2017neural}. This solution not only makes the historical contributions unnecessarily additive but also allows the modeling of complex memory effects such as delays. However,  these RNN-based Hawkes processes also inherit the drawbacks of RNN, for instance, it may take a long time for the patient to develop symptoms due to certain sequel, which has obvious long-term characteristics, such as diabetes, cancer and other chronic diseases, while these RNN-based models are hard to reveal the long-term dependency between the distant events in the sequences \cite{bengio1994learning}.
Other likelihood-free methods such as using reinforcement learning \cite{li2018learning} and generative adversarial networks \cite{xiao2017wasserstein} to help deal with the complex asynchronous event sequences data are also investigated recently.

{Recent developments in natural language processing (NLP) have led to an increasing interest in the self-attention mechanism. \cite{zhang2020self} present self-attention Hawkes process, furthermore, \cite{zuo2020transformer}
propose transformer Hawkes process based on the attention mechanism and encoder structure in transformer. This model utilizes pure transformer structure without using RNN and CNN, and achieves state-of-the-art performance. Although the self-attention mechanism applied in the Hawkes process performs empirically superior to RNNs in processing sequences data, most of the existing attention-based Hawkes processes architecture \cite{zhang2020self,zuo2020transformer,zhang2021universal,yu2022transformer} still suffer from timestamp noise sensitivity problem and sequence prediction issue. Specifically, common asynchronous events sequences data generated in reality will be naturally accompanied by temporal noise such as timestamp translation or systematic accuracy deviation due to the limited recording capabilities and storage capacities. For example, in large wireless networks, the event sequences are usually logged at a certain frequency by different devices whose
time might not be accurately synchronized and whose accuracy varies within a big range.  Conventional THP and its variants simply adopt initial sinusoid embedding in transformers and underexplored the position and time encoding problem in neural Hawkes processes, which is, however, crucial for the modeling of asynchronous events. We will next also demonstrate the timestamp noise sensitivity problem empirically. What's more,  to increase the generalizability of transformer in long sequences, various position encoding methods\cite{he2020deberta,press2021train,su2024roformer}are proposed. The versatility and great success of large language model (LLM) in handling complex tasks and its potential need to work with longer texts  further stimulated more recent research on position coding \cite{chen2023extending,peng2023yarn} regarding the sequence prediction and generation issue.  However, existing attention-based Hawkes process models can't be easily used or generalized in sequences prediction tasks. We will next discuss the problem from the position encoding architecture perspective and illustrate it by using their empirical performance in the related prediction tasks. To overcome the aforementioned problems, we construct a new Rotary Position Embedding-based THP (RoTHP) architecture. By adaptively adjusting the position encoding method to accommodate the crucial temporal information and utilizing the interval characteristics of Hawkes process, we further show the translation invariance property and {sequence prediction flexibility} of our RoTHP induced by the {relative time embeddings} when coupled with Hawkes process theoretically. Additional simulation studies are also provided to illustrate the superior performance of our RoTHP compared to the existing attention-based Hawkes process models.}

{In a nutshell, the contributions of the paper are: (i) We propose a RoTHP architecture to deal with the timestamp noise sensitivity problem and sequence prediction issue suffered by existing attention-based Hawkes process models; (ii) We show the translation invariance property and {sequence length flexibility} of our proposed RoTHP induced by the {relative time embeddings} when coupled with Hawkes process theoretically; (iii) We demonstrate empirically that our RoTHP can be better generalized in sequence data scenarios with translation or noise in timestamps and sequence prediction tasks. }

\textbf{Temporal point processes:} A temporal point process (TPP) is a stochastic process whose realization consists of a sequence of discrete events localized in continuous time, $\mathcal{H}=\left\{t_i \in \mathbb{R}^{+} \mid i \in \mathbb{N}^{+}, t_i<t_{i+1}\right\}$ \cite{daley2007introduction}.  TPP can be equivalently represented as a counting process $N(t)$, which records the number of events that have happened till time $t$. We indicate with $\mathcal{H}_t:=\left\{t^{\prime} \mid t^{\prime}<t, t_i \in \mathbb{R}^{+} \right\}$ the historical sequence of events that happened before $t$.
Given an infinitesimal time window $[t, t+\mathrm{d} t)$, the intensity function of a TPP is defined as the probability of the occurrence of an event $ t^{\prime}$ in $[t, t+\mathrm{d} t)$ conditioned on the history of events $\mathcal{H}_t$:
\begin{equation}\label{eq1}
%\begin{align}
\lambda(t) \mathrm{d} t:  =p\left(t^{\prime}: t^{\prime} \in[t, t+\mathrm{d} t) \mid \mathcal{H}_t\right) \nonumber
	 =\mathbf{E}\left(\mathrm{d} N(t) \mid \mathcal{H}_t\right),
%\end{align}
\end{equation}
where $\mathbf{E}\left(\mathrm{d} N(t) \mid \mathcal{H}_t\right)$ denotes the expected number of events in $[t, t+\mathrm{d} t)$ based on the history $\mathcal{H}_t$. Without loss of generality, we assume that two events do not happen simultaneously, i.e., $\mathrm{d} N(t) \in\{0,1\}$.

Based on the intensity function, it is straightforward to derive the probability density function $f(t)$ and the cumulative distribution function $F(t)$ \cite{rasmussen2018lecture}:
\begin{align}\label{eq2}
	\nonumber & f(t)=\lambda(t) \exp \left(-\int_{t_{i-1}}^t \lambda(\tau) \mathrm{d} \tau\right), \\
	& F(t)=1-\exp \left(-\int_{t_{i-1}}^t \lambda(\tau) \mathrm{d} \tau\right) .
\end{align}

A marked
TPP allocates a type to each event.
We indicate with $\mathcal{S}=\left\{\left(t_i,k_i \right)\right\}_{i=1}^n$ an event sequence, where the tuple $\left(t_i,k_i \right)$ is the $i$-th event of the sequence $\mathcal{S}$, $t_i$ is its timestamp, and $k_i \in \mathcal{U}$ is its event type.

\textbf{Hawkes Process}: An Hawkes process  models the self-excitation of events of the same type and the mutual excitation of different event types, in an additive way. Hence, the definition of the intensity function is given as:
\begin{equation}\label{eq3}
\lambda(t)=\mu+\sum_{t^{\prime} \in \mathcal{H}_t} \phi\left(t-t^{\prime}\right),
\end{equation}
where $\mu \geq 0$, named base intensity, is an exogenous component of the intensity function independent of the history, while $\phi(t)>0$ is an endogenous component dependent on the history. Besides, $\phi(t)$ is a triggering kernel containing the peer influence of different event types. To highlight the peer influence represented by $\phi(t)$, we write $\phi_{u, v}(t)$, which captures the impact of a historical type-$v$ event on a subsequent type-$u$ event \cite{farajtabar2014shaping}. In this example, the occurrence of a past type-$v$ event increases the intensity function $\phi_{u, v}\left(t-t^{\prime}\right)$ for $0<t^{\prime}<t$.

To learn the parameters of Hawkes processes, it is common to use Maximum Likelihood Estimation (MLE). Other advanced methods such as adversarial learning \cite{xiao2017wasserstein} and reinforcement learning \cite{li2018learning} methods have also been proposed. The log-likelihood of an event sequence $\mathcal{S}$ over a time interval $[0, T]$ is given by:
\begin{align}\label{eq4}
	\mathcal{L} & =\log \left(\prod_{i=1}^n f\left(t_i\right) (1-F(T))\right)\nonumber\\
	& =\log \left\{\prod_{i=1}^n \left[\lambda\left(t_i\right) \exp \left(-\int_{t_{i-1}}^{t_i} \lambda(\tau) \mathrm{d} \tau\right) \right] \exp \left(-\int_{t_{n}}^T \lambda(\tau) \mathrm{d} \tau\right) \right\}\nonumber\\
	& =\sum_{i=1}^n \log \lambda_{}\left(t_i\right)-\int_0^T \lambda(\tau) d \tau
\end{align}

{\section{Position Embedding in self-attention hawkes process and Discussions}}

\subsection{Existing position embeddings in self-attention hawkes process}

\subsubsection{Sinusoid embedding in THP}

THPs and its variants \cite{zhang2021universal,zhang2022temporal} characterize the temporal information by utilizing a temporal encoding
pattern analogous to the ‘‘vanilla’’ absolute positional
encoding \cite{vaswani2017attention} for each node in the cascade. Denote the temporal encoding as below:
\begin{equation}\label{eq5}
\left[\boldsymbol{x}\left(t_i\right)\right]_j=\left\{\begin{array}{l}
\cos \left(t_i / 10000^{\frac{j-1}{D}}\right), \text { if } j \text { is odd }, \\
\sin \left(t_i / 10000^{\frac{j}{D}}\right), \text { if } j \text { is even. }
\end{array}\right.
\end{equation}

We utilize cosine and sine function to obtain the temporal encoding for the time-stamp. For each $t_i$, we can get its corresponding temporal encoding: $\boldsymbol{x}\left(t_i\right) \in \mathbb{R}^D . D$ is the model dimensions we determined. And for the event type encoding, it can be obtained through the embedding matrix $\mathbf{K} \in \mathbb{R}^{D \times C}$, for each type of event, we set its corresponding one-hot vector $\boldsymbol{c}_i \in \mathbb{R}^C$, thus, we can get the embedding $\mathbf{K} \boldsymbol{c}_i$, the dimension of $\mathbf{K} \boldsymbol{c}_i$ is also $D$.

For the event sequence $s_n=\left\{\left(c_i, t_i\right)\right\}_{i=1}^{I_n}$, the corresponding temporal encoding and event encoding are $\boldsymbol{X}^T$ and $\left(\mathbf{K} \boldsymbol{C}_n\right)^T$, where $\boldsymbol{X}=\left\{\boldsymbol{x}\left(t_1\right), \boldsymbol{x}\left(t_2\right), \ldots, \boldsymbol{x}\left(t_{I_n}\right)\right\} \in \mathbb{R}^{D \times I_n}$ and $\boldsymbol{C}_n=\left[\boldsymbol{c}_1, \boldsymbol{c}_2, \ldots, \boldsymbol{c}_{I_n}\right] \in \mathbb{R}^{C \times I_n}$. It is worth noting that the temporal encoding matrix and event encoding matrix have same dimensions $\mathbb{R}^{I_n \times D}$, and each row of them correspond

\subsubsection{Time-shifted positional encoding in Self-Attentive Hawkes Process}

Self-Attentive Hawkes Process adopts a time-shifted positional encoding to inject order information to the modeling sequence. Specifically, for an event $\left(v_i, t_i\right)$, the positional encoding is defined as a $K$ dimensional vector such that the $k$-th dimension of the position embedding is calculated as:
\begin{equation}\label{eq6}
    Pos_{\left(v_i, t_i\right)}^k=\sin \left(\omega_k \times i+w_k \times t_i\right),
\end{equation}

where $i$ is the absolute position of an event in a sequence, and $\omega_k$ is the angle frequency of the $k$-th dimension, which is pre-defined and will not be changed. While $w_k$ is a scaling parameter that converts the timestamp $t_i$ to a phase shift in the $k$-th dimension. Multiple sinusoidal functions with different $\omega_k$ and $w_k$ are used to generate the multiple position values, the concatenation of which is the new positional encoding. Even and odd dimensions of the position embedding are generated from $\sin$ and $\cos$ respectively.

\subsection{Discussions}

{{\subsubsection{Timestamp noise sensitivity }}
Here, we first show that the relative time difference property shared by the Temporal Hawkes Processes' likelihood, which will cause the position embedding in the existing self-attention Hawkes process to be sensitive to timestamp noise or translations and thus largely affect its modeling performance to sequence data.

\begin{proposition} 
Let $\mathcal{H}=\left\{t_i \in \mathbb{R}^{+} \mid i \in \mathbb{N}^{+}, t_i<t_{i+1}\right\}$ be a Hawkes process with
conditional intensity $\lambda^*(t)$ as defined in \eqref{eq4}. If we observe all the arrival times
over the time period $[t_{1}, t_{n}]$, denoted as ${t_{1},...,t_{n}}$, then the log-likelihood function $\mathcal{L}$
for $\mathcal{H}_t$ is:

\begin{equation}\label{eq7}
\mathcal{L}=\sum_{i=2}^n\log \left[\mu+\sum_{j=1}^{i-1} \phi\left(t_i-t_j\right)\right]-\mu (t_n  - t_1)-\sum_{j=1}^{n-1}\int_0^{t_n-t_{j}} \phi(s) ds,
\end{equation}
which is a function of timestamp differences $t_i-t_j, i,j \in \{1,2,...,n\}$ and $i \neq j$. Namely, condition on $t_i-t_j$, $\mathcal{L}$ is independent of $t_i, i=1,...,n$.
\end{proposition}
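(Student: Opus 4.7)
The plan is to start from the general TPP log-likelihood formula \eqref{eq4}, specialize the observation window to $[t_1,t_n]$ (so that the terminal time $T$ is replaced by $t_n$ and the survival-to-$T$ factor disappears because $t_n$ is itself an event), substitute the Hawkes intensity \eqref{eq3}, and then show that the resulting absolute-time expression can be rewritten purely in terms of differences $t_i-t_j$ via a change of variables in the compensator integral.

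More concretely, the first step is to write
\begin{equation*}
\mathcal{L} \;=\; \sum_{i=2}^{n}\log\lambda(t_i)\;-\;\int_{t_1}^{t_n}\lambda(\tau)\,d\tau,
\end{equation*}
dropping the $i=1$ term because inside the window $[t_1,t_n]$ the first event carries no history and serves as the conditioning point. Next I would plug \eqref{eq3} into the log term to get $\log\lambda(t_i)=\log\bigl[\mu+\sum_{j=1}^{i-1}\phi(t_i-t_j)\bigr]$, which is already a function of the differences only.

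The main work is the compensator $\int_{t_1}^{t_n}\lambda(\tau)d\tau$. I would split it as
\begin{equation*}
\int_{t_1}^{t_n}\mu\,d\tau \;+\; \int_{t_1}^{t_n}\!\!\sum_{j:\,t_j<\tau}\phi(\tau-t_j)\,d\tau
\;=\;\mu(t_n-t_1)\;+\;\sum_{j=1}^{n-1}\int_{t_j}^{t_n}\phi(\tau-t_j)\,d\tau,
\end{equation*}
where the interchange of sum and integral is justified by Fubini (nonnegative triggering kernel $\phi$) and the inner integration limits come from the fact that $\phi(\tau-t_j)$ is switched on exactly for $\tau>t_j$ and we observe until $t_n$. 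A change of variables $s=\tau-t_j$ in each summand then yields $\int_{0}^{t_n-t_j}\phi(s)\,ds$, producing the claimed expression \eqref{eq7}. The final sentence (independence from absolute $t_i$ given the differences) is immediate because every surviving term has been exhibited as a function of differences only; I would conclude by remarking that under the translation $t_i\mapsto t_i+\sigma$ each difference $t_i-t_j$, as well as $t_n-t_1$ and $t_n-t_j$, is preserved.

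The only mildly subtle point, and the one I would flag as the main obstacle, is the bookkeeping around the endpoints: one must be careful that the window is $[t_1,t_n]$ (not $[0,T]$ as in \eqref{eq4}), so that the survival factor $1-F(T)$ is absent, the $i=1$ likelihood factor is absorbed into the conditioning, and the compensator's lower limit $t_1$ combines cleanly with $\mu(t_n-t_1)$. All other manipulations are routine applications of Fubini and a linear change of variables.
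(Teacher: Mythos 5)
Your proposal is correct and follows essentially the same route as the paper: specialize the log-likelihood to the window $[t_1,t_n]$, substitute the Hawkes intensity, split off the $\mu(t_n-t_1)$ term, and reduce each triggering-kernel integral to $\int_0^{t_n-t_j}\phi(s)\,ds$ by the change of variables $s=\tau-t_j$. The only cosmetic difference is that you interchange sum and integral directly (Fubini over $\{(j,\tau):t_j<\tau\le t_n\}$), whereas the paper partitions $[t_1,t_n]$ into the subintervals $(t_k,t_{k+1}]$ and regroups; your handling of the dropped $i=1$ term as the conditioning point is, if anything, slightly more explicit than the paper's.
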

\begin{proof}
   According to \eqref{eq4}, we have the log-likelihood function $\mathcal{L}$ for $\mathcal{H}_t$ over $[t_{1}, t_{n}]$:

\begin{equation}\label{eq18}
\mathcal{L}=\sum_{i=1}^n \log \lambda_{}\left(t_i\right)-\int_{t_{1}}^{t_{n}} \lambda(t) d t.
\end{equation}

Substituting \eqref{eq3}, we get
\begin{equation}\label{eq19}
	\mathcal{L}=\sum_{i=1}^n\log \left[\mu+\sum_{t^{\prime} \in \mathcal{H}_t} \phi\left(t_{i}-t^{\prime}\right)\right]-\int_{t_{1}}^{t_{n}}\left[\mu+\sum_{t^{\prime} \in \mathcal{H}_t} \phi\left(t-t^{\prime}\right)\right]dt,
\end{equation}

Since we only consider the arrival times
over the time period $[t_{1}, t_{n}]$, and $\sum_{t^{\prime} \in \mathcal{H}_t} \phi\left(t_{i}-t^{\prime}\right)=\sum_{j=1}^{i-1} \phi\left(t_i-t_j\right)$ for $i=2,...n$, we have 

\begin{equation}\label{eq20}
\sum_{i=1}^n\log \left[\mu+\sum_{t^{\prime} \in \mathcal{H}_t} \phi\left(t_{i}-t^{\prime}\right)\right]=\sum_{i=2}^n\log \left[\mu+\sum_{j=1}^{i-1} \phi\left(t_i-t_j\right)\right].
\end{equation}

Also, we have
\begin{equation}\label{eq21}
\int_{t_{1}}^{t_{n}}\left[\mu+\sum_{t^{\prime} \in \mathcal{H}_t} \phi\left(t-t^{\prime}\right)\right]dt=\mu (t_n  - t_1)+\int_{t_{1}}^{t_{n}}\left[\sum_{t^{\prime} \in \mathcal{H}_t} \phi\left(t-t^{\prime}\right)\right]dt,
\end{equation}

Note that 
$[t_{1}, t_{n}]=\left[t_{1}, t_2\right] \cup\left(t_2, t_3\right] \cup \cdots \cup\left(t_{n-2}, t_{n-1}\right] \cup\left(t_{n-1}, t_{n}\right]
$, and therefore

\begin{align}\label{eq22}
	\int_{t_1}^{t_n} \sum_{t^{\prime} \in \mathcal{H}_t} \phi\left(t-t^{\prime}\right) \mathrm{d}t&=\int_{t_1}^{t_2} \sum_{t^{\prime} \in \mathcal{H}_t} \phi\left(t-t^{\prime}\right) \mathrm{d} t+\int_{t_2}^{t_3} \sum_{t^{\prime} \in \mathcal{H}_t} \phi\left(t-t^{\prime}\right) \mathrm{d} t\nonumber\\& \quad +\ldots+\int_{t_{n-1}}^{t_n} 
	\sum_{t^{\prime} \in \mathcal{H}_t} \phi\left(t-t^{\prime}\right) \mathrm{d} t.
\end{align}

For the first two terms in the right hand, we get
%$$
%\int_0^{t_1} \sum_{t^{\prime} \in \mathcal{H}_t} \phi\left(t-t^{\prime}\right) d t=0
%$$
\begin{equation}
\nonumber
\int_{t_1}^{t_2} \sum_{t^{\prime} \in \mathcal{H}_t} \phi\left(t-t^{\prime}\right) d t=\int_{t_1}^{t_2} \phi\left(t-t_1\right) d t,
\end{equation}

\begin{equation}\label{eq23}
\int_{t_2}^{t_3} \phi\left(t-t^{\prime}\right) d t=\int_{t_2}^{t_3} \phi\left(t-t_1\right) d t+\int_{t_2}^{t_3} \phi\left(t-t_2\right) d t,
\end{equation}

So, we have
\begin{equation}\label{eq24}
\int_0^{t_3} \sum_{t^{\prime} \in \mathcal{H}_t} \phi\left(t-t^{\prime}\right) d t=\int_{t_1}^{t_3} \phi\left(t-t_1\right) d t+\int_{t_2}^{t_3} \phi\left(t-t_2\right) d t.
\end{equation}

Similarly, we get
\begin{align}\label{eq25}
	\int_{t_1}^{t_n} \sum_{t^{\prime} \in \mathcal{H}_t} \phi\left(t-t^{\prime}\right) dt \nonumber& =
	\int_{t_1}^{t_n} \phi\left(t-t_1\right) d t+\int_{t_2}^{t_n} \phi\left(t-t_2\right) d t+\ldots+\int_{t_{n-1}}^{t_n} \phi\left(t-t_{n-1}\right) d t \nonumber\\&=\int_0^{t_n-t_1} \phi(s) d s+\int_0^{t_n-t_2} \phi(s) d s+\ldots+\int_{t_1}^{t_n-t_{n-1}} \phi(s)ds
 \nonumber\\& =\sum_{j=1}^{n-1}\int_0^{t_n-t_{j}} \phi(s) ds,
\end{align}
Combining \eqref{eq19},\eqref{eq20},\eqref{eq21},\eqref{eq25}, we complete the proof.
\end{proof}

By the above proposition, it is evident that the log-likelihood estimation for Temporal Hawkes Processes (THPs) is solely reliant on relative positions (timestamps), denoted as $t_{i} - t_{i-1}$ for $i = 2,3,...,n$. This suggests that if we substitute the temporal sequence with $$\{t_1+\sigma,t_2+\sigma,...,t_n+\sigma\}$$ the likelihood remains invariant (refer to Section \ref{discussion} for a detailed discussion). However, the temporal encoding in THP for the temporal sequence after the translation is
%$$Pos(t_1, t_2, ...,t_n) = (\sin (t_1/10000^{1/M}), \cos (t_1/10000^{1/M}),....,\sin (t_n/10000), \cos (t_n/10000))$$ 
\begin{equation}\label{eq_pos}
Pos(t_1, t_2, ...,t_n)=\left\{\begin{array}{l}
\cos \left(\frac{t_i+\sigma}{10000^{\frac{j-1}{D}}}\right), \text { if } j \text { is odd }, \\
\sin \left(\frac{t_i+\sigma}{10000^{\frac{j}{D}}}\right), \text { if } j \text { is even. }
\end{array}\right.
\end{equation}
It is important to note that $Pos(t_1, t_2, ...,t_n)\neq Pos(t_1+\sigma,t_2+\sigma,...,t_n+\sigma)$, which presents an inconsistency with the likelihood,  generally being the main part of the loss function for neural Hawkes process modeling and training. Consequently, for the current timestamp encoding issue, we turn our attention to the relative positional encoding method, a technique extensively employed in Natural Language Processing (NLP) tasks. Specifically, we focus on the Rotary Positional Encoding (RoPE). Our aim is to adapt the RoPE for use in the context of neural Hawkes Point Processes.

{{\subsubsection{ Sequence prediction issue}}
The ability to predict future events based on past data is a critical aspect of many fields, including finance, healthcare, social media analytics, and more. In the context of temporal point processes, this predictive capability becomes even more vital. }

The importance of future prediction in temporal point processes is underscored by the need for stability and sensitivity to temporal translations. In the Hawkes process, the Transformer Hawkes Process (THP) is sensitive to temporal translations, meaning that small changes in the input sequence can lead to significant changes in the output. This sensitivity can be problematic when trying to predict future events based on past data, as minor variations in the input can lead to inaccurate predictions. 

In practical applications, such as financial transactions, the ability to accurately predict future events can have significant implications. For instance, in finance, accurate predictions can inform investment strategies and risk management decisions. In social media analytics, they can help anticipate user behavior and trends. Therefore, we need a stable model architecture under translations to predict future features.

\vspace{4pt}

\section{Proposed Model}
We will next introduce the proposed Position Embedding-based transformer Hawkes process and its properties.
\subsection{Rotary
Position Embedding-based transformer Hawkes Process }

\subsubsection{Model architecture}
The key idea in our model design is to apply the rotary position embedding method \cite{su2024roformer} into temporal process. We fix our notations: Let $M$ be the embedding dimension, $K$ be the number of events. Let $\mathcal{S} = \{(t_i,k_i)\}_{i = 1}^n$ be a sequence of Hawkes process.

We use $\mathbf{X}$ to denote the matrix representing the one-hot vector corresponding to the event sequence. $\mathbf{X}\in\mathbb{R}^{K\times L}$, the $i$th column of $\mathbf{X}$ is a one-hot vector where the $j$th entry is non-zero if and only if $k_i = j$. We train en event embedding matrix $W^E$, its $i$th column is the embedding of the $i$th event. Hence the embedding of the event is given by $\mathbf{Y} = W^E\mathbf{X}$.

The transformer models have no position information. Unlike the absolute positional embedding used in THP, we consider the $\mathbf{Rotary\ Temporal\ Positional\ Embedding}$ (RoTPE). Let $W^Q, W^K, W^V$ be the linear transformations corresponds to the $\mathbf{Q}, \mathbf{K}, \mathbf{V}$ vectors, i.e. 
\begin{align}\label{eq8}
    \mathbf{Q} \nonumber
	& = \mathbf{Y}W^Q, \\ \mathbf{K} \nonumber
	& = \mathbf{Y}W^K, \\\mathbf{V} 
	& = \mathbf{Y}W^V,
\end{align}

where $W^Q, W^K \in\mathbb{R}^{M\times M_Q}$, $W^V\in\mathbb{R}^{M\times M_V}$, $M_K$ in the dimension of the query embedding which is an even number and $M_V$ is the dimension of the value vector. Set
\begin{equation}\label{eq9}
\theta_j = 10000^{-2(j-1) / d},  \quad j = 1, 2,..., d/2,
\end{equation}
Let
\iffalse
$$
\begin{pmatrix}
\cos t_i\theta_1 & \sin t_i\theta_1 & 0 & 0 & ...& 0 & 0\\
-\sin t_i\theta_1 & \cos t_i\theta_1& 0 & 0 & ...& 0 & 0\\
0&0&\cos t_i\theta_2 & \sin t_i\theta_2  & ...& 0 & 0\\
0&0&-\sin t_i\theta_2 & \cos t_i\theta_2 & ...& 0 & 0\\
...&...&...&...&...&...&...\\
0&0&0&0&...&\cos t_i\theta_{d/2} & \sin t_i\theta_{d/2}\\
0&0&0&0&...&-\sin t_i\theta_{d/2} & \cos t_i\theta_{d/2}\\
\end{pmatrix}
$$
\fi

$$
\begin{pmatrix}
\cos t_i\theta_1 & \sin t_i\theta_1 & ...& 0 & 0\\
-\sin t_i\theta_1 & \cos t_i\theta_1& ...& 0 & 0\\
...&...&...&...&...\\
0&0&...&\cos t_i\theta_{d/2} & \sin t_i\theta_{d/2}\\
0&0&...&-\sin t_i\theta_{d/2} & \cos t_i\theta_{d/2}\\
\end{pmatrix}
$$
be the rotary matrix $R_{t_i}$. We can see that $R_{t_i}^TR_{t_j}$ is just
\iffalse
$$
\begin{pmatrix}
\cos (t_j-t_i)\theta_1 & \sin (t_j-t_i)\theta_1 & 0 & 0 & ...& 0 & 0\\
-\sin (t_j-t_i)\theta_1 & \cos (t_j-t_i)\theta_1& 0 & 0 & ...& 0 & 0\\
0&0&\cos (t_j-t_i)\theta_2 & \sin (t_j-t_i)\theta_2  & ...& 0 & 0\\
0&0&-\sin (t_j-t_i)\theta_2 & \cos (t_j-t_i)\theta_2 & ...& 0 & 0\\
...&...&...&...&...&...&...\\
0&0&0&0&...&\cos (t_j-t_i)\theta_{d/2} & \sin (t_j-t_i)\theta_{d/2}\\
0&0&0&0&...&-\sin (t_j-t_i)\theta_{d/2} & \cos (t_j-t_i)\theta_{d/2}\\
\end{pmatrix}
$$
\fi
$R_{t_j-t_i}$, measuring the relative temporal difference at position $i,j$. The attention matrix $A$ is given by
\begin{align}\label{eq10}
q_i^TR_{t_i}^TR_{t_j}k_j \nonumber &= y_i^T(W^Q)^TR_{t_i}^TR_{t_j}W^Ky_j
\\\nonumber&=y_i^T(W^Q)^TR_{t_j-t_i}W^Ky_j
    \\&=q_i^TR_{t_j-t_i}k_j,
\end{align}
which is only related to the time difference. The attention output is given by
\begin{equation}\label{eq11}
O = Softmax(\frac{A}{\sqrt{D_K}})V
\end{equation}
using the normalization. Then we apply a feed-forward neural network to get the hidden representation $\mathbf{h}(t_j)$ for $1\leq j\leq n$. Hence we can see from the construction that our RoTHP model only depends on the relative position, which is coincide with the loss function: If we modify our marked temporal sequence $\{(t_1,k_1), (t_2,k_2),...,(t_n,k_n)\}$ to $\{(t_1 +\sigma,k_1), (t_2 +\sigma,k_2),...,(t_n +\sigma,k_n)\}$ with a translation, both the loss function and our model output keep the same. 

Since the rotary matric are orthogonal, $R_iq_i$ and $R_jk_j$ will have the same length with $q_i,k_j$, this will make our training more stable. 

\subsubsection{Training}

The intensity function of Neural Hawkes process is given by
\begin{equation}\label{eq12}
\lambda(t) = \sum_{k=1}^K\lambda_k(t),
\end{equation}
where $\lambda_k$ is the intensity function of the $k$-th event, and 

\begin{equation}\label{eq13}
\lambda_k=f_k(\alpha_k(t-t_j) + \mathbf{w}_k^T\mathbf{h}(t_j) + b_k),
\end{equation}
in which $t$ is defined on interval $t \in\left[t_j, t_{j+1}\right)$, and $f_k(x)=\beta_k \log \left(1+\exp \left(x / \beta_k\right)\right)$ is the softplus function with "softness" parameter $\beta_k$. Different from the conditional intensity function setting in the THP \cite{zuo2020transformer}, here we directly adopt the time difference $t-t_{j}$ without the normalization by $t_{j}$. We will illustrate the translation invariance property under this modified form in the next subsection and show its superior performance compared to the original THP by experiments in Section \ref{exp}.

For the prediction of next event type and timestamp, we train two linear layers $W^{e}, W^t$
\begin{align}\label{eq14}
    \hat{k}_{j+1} \nonumber&=argmax(Softmax(W^e\mathbf{h}(t_j)),\\
\hat{t}_{j+1} &= W^t\mathbf{h}(t_j).
\end{align}

For the sequence $\mathcal{S}$, we define
\begin{align}\label{eq15}
\mathcal{L}_{event}(\mathcal{S}) \nonumber&= \sum_{j=1}^{n-1}-\log(Softmax(W^e\mathbf{h}(t_j))_{k_{j+1}}),\\
\mathcal{L}_{time}(\mathcal{S}) &= \sum_{j = 1}^{n-1}((t_{j+1} - t_j) - (\hat{t}_{j+1} -\hat{t}_j))^2,
\end{align}
where ${t}_j$ is the true time stamp of the event $j$. By definition,  $\mathcal{L}_{event}$ measures the accuracy of the event type prediction and $\mathcal{L}_{time}$ measures the mean square loss the of time prediction. Denote the log-likelihood of $\mathcal{S}$ as $\mathcal{L}$,then the training loss can be defined by
\begin{equation}\label{eq16}
\mathcal{L}(\mathcal{S}) = -\mathcal{L} + \beta_1\mathcal{L}_{event}(\mathcal{S}) + \beta_2\mathcal{L}_{time}(\mathcal{S}),
\end{equation}
where $\beta_1,\beta_2$ are hyper-parameters to control the range of event and time losses.

\subsection{Translation Invariance Property\label{discussion}}

In this subsection, we will first show the interval characteristics of the Hawkes process and then illustrate the translation invariance property of our proposed RoTHP. 

\begin{proposition}(Translation Invariance Property): We indicate with $\mathcal{S}=\left\{\left(t_i,k_i \right)\right\}_{i=1}^n$ a Neural Hawkes process where the tuple $\left(t_i,k_i \right)$ is the $i$-th event of the sequence $\mathcal{S}$, $t_i$ is its timestamp, and $k_i \in \{1,2,...,K\}$ is its event type, with intensity function as given in \eqref{eq12} and \eqref{eq13}. Set $$\mathcal{S}_\sigma = \{(t_1 +\sigma,k_1), (t_2 +\sigma,k_2),...,(t_n +\sigma,k_n)\}$$
be the modified sequence $\mathcal{S}$ with a translation. 
Then we have the following translation invariance property for our proposed RoTHP:
\begin{equation}\label{eq17}
	\mathcal{L}(\mathcal{S}) = \mathcal{L}(\mathcal{S_\sigma})
\end{equation}
\end{proposition}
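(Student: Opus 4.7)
The plan is to reduce the translation invariance of the entire training loss $\mathcal{L}(\mathcal{S})=-\mathcal{L}+\beta_1\mathcal{L}_{\text{event}}(\mathcal{S})+\beta_2\mathcal{L}_{\text{time}}(\mathcal{S})$ to a single structural fact: every hidden representation $\mathbf{h}(t_j)$ produced by the RoTHP depends on the input sequence only through the pairwise time differences $t_j-t_i$. Once this is in hand, each of the three pieces of the loss becomes manifestly translation invariant by a short argument, and the composite sum follows.

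The first step is to establish the invariance of the hidden states. Letting $\mathbf{h}_\sigma(\cdot)$ denote the hidden representations computed on $\mathcal{S}_\sigma$, I will show by induction over the stacked transformer layers that $\mathbf{h}_\sigma(t_j+\sigma)=\mathbf{h}(t_j)$ for every $j$. The base case is immediate because the token embedding $\mathbf{Y}=W^E\mathbf{X}$ uses only the event marks $k_i$, which are unchanged by the translation. For the inductive step I will invoke the identity derived in (\ref{eq10}): the RoPE-modified inner product satisfies $q_i^T R_{t_i}^T R_{t_j} k_j = q_i^T R_{t_j-t_i} k_j$, so the attention scores for $\mathcal{S}_\sigma$ at positions $(i,j)$ involve $R_{(t_j+\sigma)-(t_i+\sigma)}=R_{t_j-t_i}$ and therefore coincide with those for $\mathcal{S}$. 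Since the softmax, value mixing, residual connection, layer-norm, and feed-forward sub-layers are all pointwise in the token index, invariance propagates through the layer.

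Next I will discharge each component of the loss. For $\mathcal{L}_{\text{time}}$ the argument is trivial: the prediction $\hat{t}_{j+1}=W^t\mathbf{h}(t_j)$ shifts by $\sigma$ under translation (equivalently, the differences $\hat{t}_{j+1}-\hat{t}_j$ and $t_{j+1}-t_j$ are both unchanged), so each squared residual in (\ref{eq15}) is preserved. For $\mathcal{L}_{\text{event}}$, the softmax in (\ref{eq14}) is a function of $\mathbf{h}(t_j)$ alone, which is invariant by Step~1, and the target labels $k_{j+1}$ are unchanged. For the log-likelihood $\mathcal{L}=\sum_i\log\lambda(t_i)-\int_{t_1}^{t_n}\lambda(\tau)\,d\tau$, I will first show that the intensity transforms as $\lambda^{(\sigma)}(t+\sigma)=\lambda(t)$: substituting into (\ref{eq13}) gives $\alpha_k((t+\sigma)-(t_j+\sigma))=\alpha_k(t-t_j)$, and $\mathbf{w}_k^T\mathbf{h}_\sigma(t_j+\sigma)+b_k=\mathbf{w}_k^T\mathbf{h}(t_j)+b_k$ by invariance of the hidden states. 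Summing the event-term contributions at $t_i+\sigma$ then reproduces the original $\sum_i\log\lambda(t_i)$, and a change of variables $u=\tau-\sigma$ on the integral over $[t_1+\sigma,t_n+\sigma]$ converts it back to $\int_{t_1}^{t_n}\lambda(u)\,du$.

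The main obstacle is really the clean bookkeeping in Step~1: I must make sure that the rotary factors $R_{t_i}$ are applied only inside the query/key product (not to the values), and that every other sub-layer in the transformer block genuinely acts token-wise so that the induction goes through. A secondary pitfall is the integral term in the log-likelihood, where one must verify that the endpoints of the integration interval also translate by $\sigma$ (so that the change of variables is exact rather than introducing boundary terms); this is consistent with the interval-based likelihood already used in Proposition~1, and once noted, closes the argument.
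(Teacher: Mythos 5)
Your proposal is correct and follows essentially the same route as the paper's own proof: both reduce the claim to the fact that the hidden states $\mathbf{h}(t_j)$ depend on the timestamps only through the differences $t_j-t_i$ (via the identity $q_i^TR_{t_i}^TR_{t_j}k_j=q_i^TR_{t_j-t_i}k_j$), and then check the three loss components $\mathcal{L}$, $\mathcal{L}_{event}$, $\mathcal{L}_{time}$ separately, with a change of variables handling the integral in the log-likelihood. Your version is in fact somewhat more careful than the paper's (the explicit layer-wise induction and the transformation law $\lambda^{(\sigma)}(t+\sigma)=\lambda(t)$ are only implicit there); the one small slip is the parenthetical claiming $\hat{t}_{j+1}=W^t\mathbf{h}(t_j)$ \emph{shifts} by $\sigma$, whereas by your own Step~1 it is unchanged --- either way the predicted differences $\hat{t}_{j+1}-\hat{t}_j$ are invariant, so the conclusion stands.
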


\begin{proof}
    Observe that the translation doesn't affect the value of timestamp differences $t_i-t_j$, so it's enough for us to prove that  $\mathcal{L}(\mathcal{S})$ is a function of  $t_i-t_j$, where $i,j \in \{1,2,..., n\}$ and $i \neq j$. Namely, condition on $t_i-t_j$, $\mathcal{L}(\mathcal{S}) $ is independent of $t_i, i=1,...,n$.

Since $\mathcal{L}(\mathcal{S})=-\mathcal{L} + \beta_1\mathcal{L}_{event}(\mathcal{S}) + \beta_2\mathcal{L}_{time}(\mathcal{S})$, next we complete the proof for $\mathcal{L}$, $\mathcal{L}_{event}(\mathcal{S})$, and $\mathcal{L}_{time}(\mathcal{S})$ respectively.

\noindent For $\mathcal{L}$, we have 
\begin{align}
	\mathcal{L} & =\log \left[\prod_{i=1}^{{n}}f\left(t_i\right) f\left(k=k_i \mid t_i\right)\right] \nonumber\\
	& =\log \left[\prod_{i=1}^n \lambda\left(t_i\right) \exp \left(-\int_{t_{i-1}}^{t_i} \lambda(\tau) \mathrm{d} \tau\right) \frac{\lambda_{k_i}\left(t_i\right)}{\lambda\left(t_i\right)}\right] \nonumber\\
	& =\sum_{i=1}^{n}  \log \lambda_{k_i}\left(t_i\right)-\int_{t_{1}}^{t_{n} }\lambda(\tau) d \tau.
\end{align}

By construction of RoTHP, the attention is calculated by the inner product of the $Q$ vector and $K$ vector, we have:
\begin{align}\label{eq10}
	q_i^TR_{t_i}^TR_{t_j}k_j \nonumber &= y_i^T(W^Q)^TR_{t_i}^TR_{t_j}W^Ky_j
	\\\nonumber&=y_i^T(W^Q)^TR_{t_j-t_i}W^Ky_j
	\\&=q_i^TR_{t_j-t_i}k_j,
\end{align}
which is function of $t_i-t_j$. Since the $\bold{h(t_j)}$ involved in $\mathcal{L}$ is generated by feed-forward neural network with the attention output fed through,  
So the $\bold{h(t_j)}$  is also function of $t_i-t_j$, and we only need to consider the term $t-t_j$ involved in $\mathcal{L}$. Since the intensity function is chosen as in \eqref{eq12} and \eqref{eq13}, which we have proved for general $\phi\left(t-t_{j}\right)$  as in the proof of Proposition 1, $\mathcal{L}$ is also a function of $t_i-t_j$.

For $\mathcal{L}_{event}(\mathcal{S})$, the $\bold{h(t_j)}$  has been proved to be function of $t_i-t_j$;

Similarly for  $\mathcal{L}_{time}(\mathcal{S})$, it involves the terms $t_i-t_j$ and $\hat{t}_{j+1} -\hat{t}_j$. Since $\hat{t}_{j+1} = W^t\mathbf{h}(t_j)$ , $\hat{t}_{j+1} -\hat{t}_j$ is also function of $t_i-t_j$.

Thus we complete the proof.
\end{proof}

This implies that the timestamp translation will not change the loss function under the modeling of our RoTHP.

For the positional encoding in the THP and its variants, the temporal embedding and event embedding is added before the attention mechanism, hence the model output will change a lot after the translation. This implies that RoTHP is more appropriate for the modeling of gereral Hawkes process.

\textbf{Robustness to random noise}: we consider the noise given by translation and Gaussian noise. When the translation is applied to the temporal sequence, the model should give the same output since we only consider the temporal interval $[t_1, t_n]$. RoTHP satisfies this property. However, for the absolute positional encoding used in THP, change of the temporal sequence by translation will change the positonal encoding and hence change the model output. We will show that as the translation $\sigma$ changes, the behavior of THP is unstable.

We also consider the Gaussian noise acting on the temporal sequence. For a temporal sequence $\{(t_1, k_1), (t_2, k_2),...,(t_n,k_n)\}$, we consider the new sequence $\Tilde{\mathcal{S}} = \{(t_1 + \sigma_1, k_1), (t_2 + \sigma_2, k_2),...,(t_n + \sigma_n,k_n)\}$ where $\sigma_i\sim \mathcal{N}(0,\epsilon)$. We will show later that our proposed RoTHP will also have better performance under the Gaussian noise.

\subsection{Sequence Prediction Flexibility}

In multiple Natural Language Processing (NLP) tasks, RoPE has shown the extension property, which means that it can deal with longer sequences. Here we consider the case when we want to use the previous information to predict the future ones. For a temporal sequence $\mathcal{S}=\{(t_1,k_1),(t_2,k_2),...,(t_n,k_n)\}$, we pick an index $1<m<n$, and set $\mathcal{S}_{[1:m]} = \{(t_1,k_1),(t_2,k_2),...,(t_m,k_m)\}$ as the training sample, and $\mathcal{S}_{[m+1:n]} = (t_{m+1},k_{m+1}),(t_{m+2},k_{m+2}),...,(t_n,k_n)$ as the testing sample. This is the case when we need to predict the future case, which is really useful in the real world.

Intuitively, the RoTHP is much more stable in the time translations compared to THP and its variants, which may also lead to its better performance in future event prediction tasks for the Hawkes process. We next first give an analysis here. Consider the input of the models for THP and RoTHP. The training input for THP is the sequence $\mathcal{S}_{[1:m]}=\{(t_1,k_1),(t_2,k_2),...,(t_m,k_m)\}$ and the testing input is the sequence $\mathcal{S}_{[m+1:n]} = (t_{m+1},k_{m+1}),(t_{m+2},k_{m+2}),...,(t_n,k_n)$, so these two sequences have totally different temporal distribution since the model hasn't seen the future time stamps during the training process, which is the essential obstacle of THP in this case. However, for RoTHP, by the translation invariant property, we may view the training input and testing input as $\mathcal{S}_{[1:m]}=\{(0,k_1),(t_2-t_1,k_2),...,(t_m-t_1,k_m)\}$ and $\mathcal{S}_{[m+1:n]} = (0,k_{m+1}),(t_{m+2}-t_{m+1},k_{m+2}),...,(t_n-t_{m+1},k_n)$, the model thus can extend the distribution of the relative timestamps in the training set to the testing set. So RoTHP can lead to a better prediction. We will show the results for the experiments of future prediction using RoTHP and THP in Section \ref{Predictability}.

\section{Experiment}

\subsection{Dataset}

Our experiments were conducted on three distinct datasets. These datasets were specifically chosen due to their long temporal sequences, which are ideal for our study on encoding methods. Furthermore, the Synthetic dataset, generated from a Hawkes process, provides a suitable case for our investigation.

\paragraph{Synthetic} This dataset was generated using Python, following the methodology outlined in \cite{zhang2020self}. It is a product of a Hawkes process, making it a suitable case for our study. Our synthetic dataset admits 5 event types, with average length 60. The minimal length is 20 and the maximal length is 100.

\paragraph{Financial Transactions}\cite{du2016recurrent} This dataset comprises stock transaction records from a single trading day. The sequences in this dataset are lengthy, and the events are categorized into two types: "Buy" and "Sell". The average length of the dataset is 2074 and is appropriate to our experiment.

\paragraph{StackOverFlow}\cite{leskovec2016snap} This dataset is a collection of data from the question-answer website, Stacoverflow. We consider the history of user interactions as a temporal sequence. The average length of the sequences in the dataset is 72, with minimum 41 and maximum 736, and it has 22 event types.

\paragraph{Retweet}\cite{zhao2015seismic}  The data set for Retweets compiles a variety of tweet chains. Every chain comprises an original tweet initiated by a user, accompanied by subsequent response tweets. The accompanying information includes the timing of each tweet and the user's identifier. The average length of the sequences is 109, with minimum 50 and maximum 264. The event types are separated into 3 different types depending on number of the followers: "small", "medium" and "large"

\paragraph{Memetrack}\cite{leskovec2016snap} This dataset comprises references to 42,000 distinct memes over a period of ten months. It encompasses data from more than 1.5 million documents, including blogs and web articles, sourced from over 5,000 websites. Each sequence in the dataset represents the lifespan of a specific meme, with each occurrence or usage of the meme linked to a timestamp and a website ID.

\paragraph{Mimic-II}\cite{johnson2016mimic} The MIMIC-II medical dataset compiles data from patients' admissions to an ICU over a span of seven years. Each patient's visits are considered distinct sequences, with each sequence event marked by a timestamp and a diagnosis.

\subsection{Baselines}
\iffalse
We compare our model with four other models in this paper. However, our primary focus is on the comparison between our model and the Transformer Hawkes Process (THP), as we aim to understand the impact of rotary temporal positional encoding.

\paragraph{Recurrent marked temporal point process} \cite{du2016recurrent} The RMTPP is a traditional model that employs a Recurrent Neural Network (RNN) architecture to predict the timing of the next event.

\paragraph{Neural Hawkes process} \cite{mei2017neural} The Neural Hawkes Process incorporates neural networks into the Hawkes process to enhance prediction accuracy.

\paragraph{Self-attentive Hawkes process} \cite{zhang2020self} This model utilizes an attention mechanism to predict the Hawkes process. It also incorporates another positional encoding in its model construction.

\paragraph{Transformer Hawkes process} \cite{zuo2020transformer} The THP applys the transformer architecture to the Hawkes process. The authors of this model use absolute positional encoding in their model construction.
\fi
In our research, we juxtapose our devised model with four separate models. The primary focus, however, is concentrated on comparing our model and the THP, in an attempt to comprehend the influence of the rotary temporal positional encoding.

The first model, Recurrent Marked Temporal Point Process (RMTPP)(\cite{du2016recurrent}), employs a Recurrent Neural Network (RNN) architecture in predicting the temporal occurrence of the subsequent event. The Neural Hawkes process (NHP) (\cite{mei2017neural}), infuses neural networks with the Hawkes process to phenomenalize the prediction accuracy. In \cite{zhang2020self} propound the Self-attentive Hawkes Process (SAHP), which harnesses an attention mechanism in the Hawkes process predictions and incorporates further positional encoding in the model fabrication. Lastly, the Transformer Hawkes process (THP) as operatized in \cite{zuo2020transformer}, applies the transformer structure to the Hawkes process and incorporates absolute positional encoding in the model construction

\subsection{Implementation}

The architecture of our proposed RoTHP adopts similar transformer construction and chosen hyper-parameters for the Stackoverflow and financial transaction datasets . For the synthetic dataset, we use the construction of Set 1 in \cite{zuo2020transformer} and batch size 4, learning rate 1e-4. We will compare the result of THP using the same architecture and hyper-parameters for Synthetic dataset. 

\subsection{Result}

%\paragraph{Analysis for log-likelihood}
We can see that the RoTHP outperforms all other models for these three datasets for the log-likelihood, see Table \ref{table:log}\\
We evaluated the performance of various models, including RMTPP, NHP, SAHP, THP, and our proposed model. The evaluation metrics employed were log-likelihood and accuracy.

\paragraph{Log-Likelihood Analysis} Our model exhibited a significant advantage on the Financial dataset, achieving the highest log-likelihood value of 1.076, which is a clear indication of its superior fit compared to the other models. This suggests that our model is particularly adept at capturing the complex patterns and relationships inherent in financial data, which is crucial for accurate prediction and analysis in this domain.

On the SO dataset, our model's log-likelihood was 0.389. This high log-likelihood reflects a more accurate representation of the data's distribution, indicating that our model can effectively handle the intricacies of social interactions and networks.

In the case of the Synthetic dataset, our model once again demonstrated its robustness by attaining the highest log-likelihood value of 1.01. This strong performance on synthetic data, which is designed to mimic real-world scenarios, underscores the model's generalizability and its ability to adapt to various data structures.

For the Retweet dataset, we achieved 2.01, the only model larger than 0, also for the Memetrack dataset, we get the highest score of 1.71. These facts reflects our model's strong ability in the scenario of infomation spread in the social network.

For the Mimic-II dataset, our model outperforms all other models with log-likelihood 0.64. This implies our model's ability in the short temporal sequences case is still strong.

%\paragraph{Accuracy Assessment} In terms of accuracy, our model achieved a commendable 62.37\% on the Financial dataset, outperforming all other models. This result highlights the model's predictive prowess in financial contexts, where even marginal improvements in accuracy can have substantial practical implications.

%While our model's accuracy on the SO dataset was 46.3\%, matching the performance of NHP and THP, it remained competitive within the field. This suggests that our model is capable of navigating the complexities of social data, which often involves understanding nuanced interactions and relationships.

%The accuracy of our model on the Synthetic dataset was 38.13\%, aligning with the results of THP. It is important to note that synthetic data can be more challenging to model due to its artificial nature, and our model's performance on this dataset indicates a solid understanding and adaptation to such environments.

%\begin{minipage}[t]{0.48\textwidth}
\begin{table}[h]
%\begin{minipage}{0.48\textwidth}
\centering
\caption{Log-likelihood}
\label{table:log}
\footnotesize
\begin{tabular}{@{}l|cccccc@{}}%{lp{4cm}}
\toprule
\textbf{Models} & \textbf{Financial} & \textbf{SO} & \textbf{Synthetic}& \textbf{Retweet}& \textbf{MemeTrick}&\textbf{Mimic-II} \\
\midrule
RMTPP & -3.89 & -2.6 & -1.33 & -5.99 & -6.04&-1.35\\ 
NHP & -3.6 & -2.55 & - & -5.6 & -6.23 & -1.38 \\ 
SAHP & - & -1.86 & 0.59 & -4.56 & - & -0.52\\ 
THP & -1.11 & -0.039 & 0.791 & -2.04 & 0.68 & 0.48\\ 
RoTHP & \textbf{1.076} & \textbf{0.389} & \textbf{1.01} & \textbf{2.01} & \textbf{1.71}&\textbf{0.64}\\
\bottomrule
\end{tabular}

\end{table}

\iffalse
\begin{minipage}{0.48\textwidth}
\centering
\footnotesize
\begin{tabular}{@{}l|ccc|c@{}}%{lp{4cm}}
\toprule
\textbf{Models} & \textbf{Financial} & \textbf{SO} & \textbf{Synthetic} \\
\midrule
RMTPP & 61.95 & 45.9 & -\\ 
NHP & 62.20 & 46.3 & - \\ 
THP & 62.23 & 46.4 & 38.13\\ 
Ours & 62.37 & 46.3 & 38.13\\
\bottomrule
\end{tabular}
\caption{Accuracy}
\label{table:acc}
\end{minipage}
%\end{table}
\fi
%\end{minipage}

\paragraph{Accuracy and RMSE}We consider the accuracy and Root Mean Square Error (RMSE) estimation on the following datasets: Financial, Mimic-II, and SO.

For the accuracy, we can see that the RoTHP model consistently outperforms the other models across all three datasets. It achieves the highest accuracy on the Financial dataset (62.26), the Mimic-II dataset (85.5), and performs comparably on the SO dataset (46.33) with the highest being THP at 46.4.

In terms of RMSE, which is a measure of error where lower values are better, RoTHP again outperforms all other models across all datasets. It achieves the lowest RMSE on the Financial dataset (0.60), the Mimic-II dataset (0.57), and the SO dataset (1.33).

The RoTHP model demonstrates superior performance in both accuracy and RMSE across all datasets when compared to the other models. This suggests that RoTHP is a more reliable and accurate model for these particular tasks. Its consistent performance across different types of data (Financial, Mimic-II, and SO) indicates its robustness and versatility.

\begin{minipage}[h]{\textwidth}
 \begin{minipage}[h]{0.45\textwidth}
  \centering
\makeatletter\def\@captype{table}\makeatother
                    \caption{Accuracy}
    \label{table:log} 
       \begin{tabular}{c|ccc} 

    \toprule
\textbf{Models} & \textbf{Financial} & \textbf{Mimic-II} & \textbf{SO} \\
\midrule
RMTPP & 61.95 & 81.2 & 45.9\\ 
NHP & 62.20 & 83.2 & 46.3 \\ 
THP & 62.23 & 84.9 & \textbf{46.4}\\ 
RoTHP & \textbf{62.26} & \textbf{85.5} & 46.33\\
\bottomrule
    \end{tabular}

  \end{minipage}
  \begin{minipage}[h]{0.45\textwidth}
   \centering

    \makeatletter\def\@captype{table}\makeatother
        \caption{RMSE}
      \label{table:acc}
         \begin{tabular}{c|ccc}        
          \toprule
\textbf{Models} & \textbf{Financial} & \textbf{Mimic-II} & \textbf{SO} \\
\midrule
RMTPP & 1.56 & 6.12 & 9.78\\ 
NHP & 1.56 & 6.13 & 9.83 \\ 
SAHP & - & 3.89 & 5.57 \\ 
THP & 0.93 & 0.82 & 4.99\\ 
RoTHP & \textbf{0.60} & \textbf{0.57} & \textbf{1.33}\\
\bottomrule
      \end{tabular}

   \end{minipage}

\end{minipage}

\iffalse
     \begin{minipage}[t]{0.25\textwidth}
   \centering
        \makeatletter\def\@captype{table}\makeatother\caption{RSME}
         \begin{tabular}{cccc}        
          \toprule
\textbf{Models} & \textbf{Financial} & \textbf{SO} & \textbf{Synthetic} \\
\midrule
RMTPP & - & 207.79 & -\\ 
SAHP & - & 133.61 & -\\ 
THP & 17.45 & 110.54 & 2.29\\ 
Ours & 17.44 & 116.75 & 2.29\\
\bottomrule
      \end{tabular}
      \label{table:rmse}
   \end{minipage}

\begin{table}[h]
\centering
\footnotesize
\begin{tabular}{@{}l|ccc|c@{}}%{lp{4cm}}
\toprule
\textbf{Models} & \textbf{Financial} & \textbf{SO} & \textbf{Synthetic} \\
\midrule
RMTPP & - & 207.79 & -\\ 
NHP & - & - & - \\ 
SAHP & - & 133.61 & -\\ 
THP & 17.45 & 110.54 & 2.29\\ 
Ours & 17.44 & 116.75 & 2.29\\
\bottomrule
\end{tabular}
\caption{Accuracy}
\label{table:acc}
\end{table}
\fi

\paragraph{Comparison with THP}

%We list all results of our model and THP \ref{table:result}. The log-likelihood estimation has been largely improved, for the stackoverflow dataset, the accuracy and RMSE are worse than the THP.
\iffalse
\begin{table}[h]
\centering
\footnotesize
\begin{tabular}{l|ccc|ccc|ccc|c@{}}%{lp{4cm}}
\toprule
\multirow{2}{*}{\textbf{Models}} & \multicolumn{3}{c|}{\textbf{Financial}} & \multicolumn{3}{c|}{\textbf{Synthetic}}& \multicolumn{3}{c|}{\textbf{SO}}\\
&\textbf{log-likelihood} & \textbf{Accuracy}& \textbf{RMSE}& \textbf{log-likelihood}& \textbf{Accuracy}& \textbf{RMSE}& \textbf{log-likelihood}& \textbf{Accuracy}&\textbf{RMSE} \\
\midrule
THP & -0.218 & 0.622 & 17.45 & 0.791 & 0.3813 & 2.29 & -0.039 & \textbf{46.4} &\textbf{110.54}\\ 
Ours & \textbf{1.076} & \textbf{0.623} & \textbf{17.44} & \textbf{1.01} & 0.3813 & 2.29 & \textbf{0.389} & 46.3 &114.8\\
\bottomrule
\end{tabular}
\caption{Result}
\label{table:result}
\end{table}
\fi

In this study, we assess the performance of both THP and RoTHP based on their log-likelihood. For the training durations, RoTHP consistently outperforms THP, underscoring the benefits of rotary embedding in the Hawkes process. When applied to a financial transaction dataset, RoTHP exhibits a substantial enhancement at the sixth epoch. Furthermore, RoTHP demonstrates a significantly faster convergence rate for financial transactions, further highlighting its superior performance.

\begin{figure}[H] 
\centering 
\includegraphics[width=0.3\textwidth]{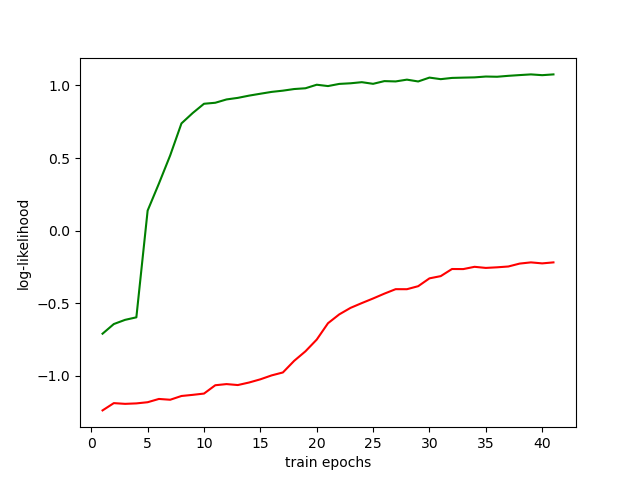} 
\includegraphics[width=0.3\textwidth]{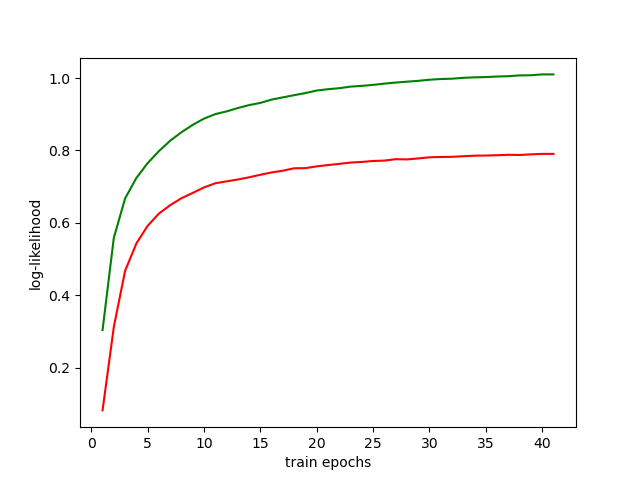} 
\includegraphics[width=0.3\textwidth]{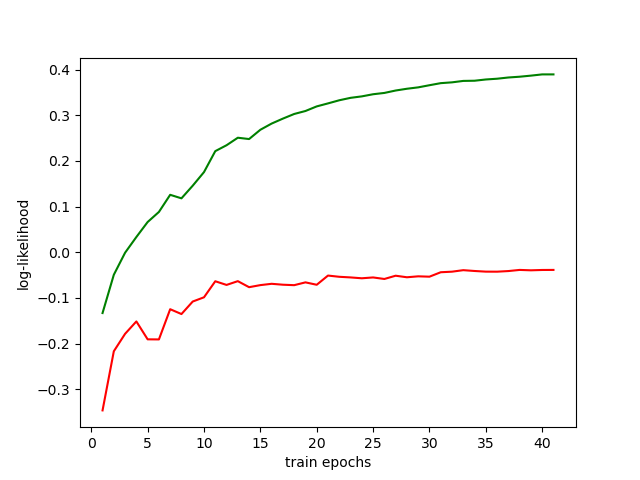} 
\caption{Comparison of log-likelihood between THP and RoTHP. Green lines are RoTHP, red lines are THP. The left figure represent the training process of financial dataset, the middle figure is for the synthetic dataset and the right is for the stackoverflow dataset. We can see in both figures RoTHP outperforms THP} 
\label{Fig.log} 
\end{figure}

\iffalse
\begin{table}[t]
\centering
\footnotesize
\begin{tabular}{@{}l|ccc|c@{}}%{lp{4cm}}
\toprule
\textbf{Models} & \textbf{Financial} & \textbf{SO} & \textbf{Synthetic} \\
\midrule
RMTPP & - & 207.79 & -\\ 
NHP & - & - & - \\ 
SAHP & - & 133.61 & -\\ 
THP & 17.45 & 110.54 & 2.29\\ 
Ours & 17.44 & 116.75 & 2.29\\
\bottomrule
\end{tabular}
\caption{RMSE}
\label{table:rmse}
\end{table}
\fi

\iffalse
\begin{table}[t]
\centering
\footnotesize
\begin{tabular}{@{}l|ccc|c@{}}%{lp{4cm}}
\toprule
\textbf{Models} & \textbf{Financial} & \textbf{SO} & \textbf{Synthetic} \\
\midrule
RMTPP & - & - & -\\ 
NHP & - & - & - \\ 
SAHP & - & - & -\\ 
THP & -0.44 & -0.13 & 0.74\\ 
Ours & 0.68 & -0.18 & 0.81\\
\bottomrule
\end{tabular}
\caption{Log-likelihood Miss}
\label{table:logmiss}
\end{table}
\fi

\iffalse
\begin{table}[h]
\centering
\footnotesize
\begin{tabular}{l|ccc|ccc|ccc|l}%{lp{4cm}}
\toprule
\multirow{2}{*}{\textbf{Models}} & \multicolumn{3}{c|}{\textbf{Financial}} & \multicolumn{3}{c|}{\textbf{Synthetic}}& \multicolumn{3}{c|}{\textbf{SO}}\\
&\textbf{log-likelihood} & \textbf{Accuracy}& \textbf{RMSE}& \textbf{log-likelihood}& \textbf{Accuracy}& \textbf{RMSE}& \textbf{log-likelihood}& \textbf{Accuracy}&\textbf{RMSE} \\
\midrule
THP & -0.219 & 0.614 & 17.45 & 0.798 & 0.3813 & 2.29 & -0.0303 & 46.4 &110.91\\ 
Ours & 0.644 & 0.621 & 17.44 & 0.871 & 0.3813 & 2.29 & -0.022 & 46.3 &117.03\\
\bottomrule
\end{tabular}
\caption{Gaussian noise (training only)}
\label{table:noise_train}
\end{table}
\fi

\subsection{Robustness study\label{exp}}

In this section, we examine the robustness of both THP and RoTHP. As previously discussed, the loss function in the transformer Hawkes process is solely dependent on relative timestamps. We explore two scenarios: Translation and Gaussian noise.

In the Translation scenario, we apply a $\sigma$ translation to all timestamps in both the training and testing datasets. The model's behavior in financial transactions for $\sigma$ values ranging from 0 to 10 is presented in Table \ref{table:translation_financial}. Remarkably, the log-likelihood of RoTHP remains virtually unchanged, whereas that of THP fluctuates significantly. This observation suggests that RoTHP exhibits greater stability in this context and shows the translation invariant property of our method.

\begin{table}[h]
\centering
\caption{Translation log-likelihood of Financial, the table reports the difference between the log-likelihood after the translation and the origin log-likelihood}
\label{table:translation_financial}

\footnotesize
\begin{tabular}{@{}l|cccccccccc@{}}%{lp{4cm}}
\toprule
\textbf{Models} & \textbf{0} &  \textbf{0.2}&  \textbf{0.4}& \textbf{0.6}& \textbf{0.8}& \textbf{1}& \textbf{2}& \textbf{5}& \textbf{10} \\
\midrule
THP & 0  & 0.01  & 0.018  & 0.027  &0.035  & 0.044 & 0.101 & 0.014 & -0.078\\ 
RoTHP  & 0 & 0 & 0 &0 & 0 & 0 & 0 & 0 & 0\\
\bottomrule
\end{tabular}

\end{table}

In the Gaussian noise scenario, we introduce Gaussian noise $\sigma_i\sim \mathcal{N}(0,\epsilon)$ to each timestamp in every temporal sequence of the training dataset, thereby incorporating Gaussian noise into the training data. We set $\epsilon$ = 0.01 for our experiment. The results are displayed in Table \ref{table:noise}. We show the difference between the log-likelihood error before and after we add the Gaussian noise on the time stamps. Here, we choose $\epsilon$ small because the time gaps between adjacent events are small, and we must choose a noise smaller than the gaps. Otherwise, this will change the order of the events in the sequences. For convenience, we show the ratio of the absolute value of the difference and $\epsilon$ to magnify the influence of the Gaussian noise. We pick the Synthetic and SO datasets to see the influence of the Gaussian noise. The reason we pick these two datasets is because we need a long sequence length so that the temporal information will be more important. The Financial dataset has too small time stamp gaps, and the Retweet dataset has integer time stamps, which are not appropriate for this case.

\begin{table}[h]
\caption{Gaussian noise}
\label{table:noise}
\centering
\footnotesize
\begin{tabular}{l|ccc|cccccc}%{lp{4cm}}
\toprule
\multirow{2}{*}{\textbf{Models}}  & \multicolumn{3}{c|}{\textbf{Synthetic}}& \multicolumn{3}{c}{\textbf{SO}}\\
& \textbf{log-likelihood}& \textbf{Accuracy}& \textbf{RMSE}& \textbf{log-likelihood}& \textbf{Accuracy}&\textbf{RMSE} \\
\midrule
THP & 0.906 & 0 & 0.104 & 0.855 & 0.04 &4.379\\ 
Ours & \textbf{0.635} & 0 & \textbf{0.051} & \textbf{0.234} & \textbf{0.012} &\textbf{1.84}\\
\bottomrule
\end{tabular}

\end{table}

From Table \ref{table:noise}, we can see that RoTHP is much stabler than THP in the Gaussian noise case, implying the robustness of the method.

\subsection{Predict the future features\label{Predictability}}

In this subsection, we consider the case where we use the previous information to predict the future ones. Intuitively, the THP is sensitive to the translation, and RoTHP is more stable. Hence the RoTHP may perform better than THP. We do the test on financial transaction, synthetic and StackOverflow dataset, and Table \ref{table:future} shows the result.

\begin{table}[h]
\centering
\caption{Future prediction}
\label{table:future}
\footnotesize
\begin{tabular}{l|ccc|ccc|ccc@{}}%{lp{4cm}}
\toprule
\multirow{2}{*}{\textbf{Models}} & \multicolumn{3}{c|}{\textbf{Financial}} & \multicolumn{3}{c|}{\textbf{Synthetic}}& \multicolumn{3}{c}{\textbf{SO}}\\
&\textbf{log-likelihood} & \textbf{Accuracy}& \textbf{RMSE}& \textbf{log-likelihood}& \textbf{Accuracy}& \textbf{RMSE}& \textbf{log-likelihood}& \textbf{Accuracy}&\textbf{RMSE} \\
\midrule
THP & -1.04 & 0.6 & 0.661 & 0.088 & 0.38249 & {0.3405} & -0.942 & 0.4 &{2.954}\\ 
Ours & \textbf{1.24} & \textbf{0.62} & \textbf{0.654} & \textbf{1.08} & \textbf{0.3825} & \textbf{0.3298} & \textbf{0.476} & \textbf{0.403} &\textbf{2.872}\\
\bottomrule
\end{tabular}

\end{table}

%\paragraph{Financial Dataset:} the RoTHP model outperforms the THP model in all three metrics. It has a higher log-likelihood (1.24 vs -1.04), indicating a better fit to the data. It also has a slightly higher accuracy (0.62 vs 0.6), suggesting it makes more correct predictions. Finally, it has a lower RMSE (0.654 vs 0.661), indicating its predictions are closer to the actual outcomes.

%\paragraph{Synthetic Dataset:} the RoTHP model again outperforms the THP model in terms of log-likelihood (1.08 vs 0.088) and accuracy (0.38250 vs 0.38249) and RMSE (0.3298 vs 0.3405), implying the strong ability in the real Hawkes process dataset.

%\paragraph{SO Dataset:} the RoTHP model outperforms the THP model in all three metrics. It has a higher log-likelihood (0.476 vs -0.942), a higher accuracy (0.403 vs 0.4), and a lower RMSE (2.872 vs 2.954).

The RoTHP model generally performs better than the THP model across all three datasets and metrics. The text also suggests that the THP model may be prone to overfitting, which could explain its poorer performance. Here the translation invariant property of RoTHP plays an important role in this case. By Figure \ref{Fig.future}, we can see that THP becomes even worse while RoTHP becomes better during the training process. Hence in this case THP meets the overfitting problem.

\begin{figure}[H] 
\centering 
\includegraphics[width=0.3\textwidth]{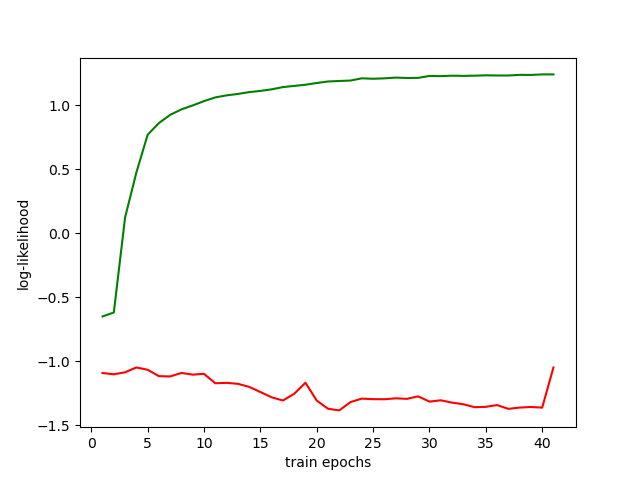} 
\includegraphics[width=0.3\textwidth]{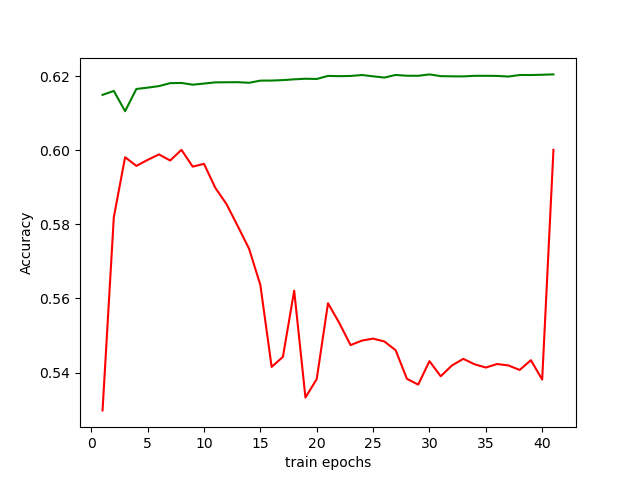} 
\includegraphics[width=0.3\textwidth]{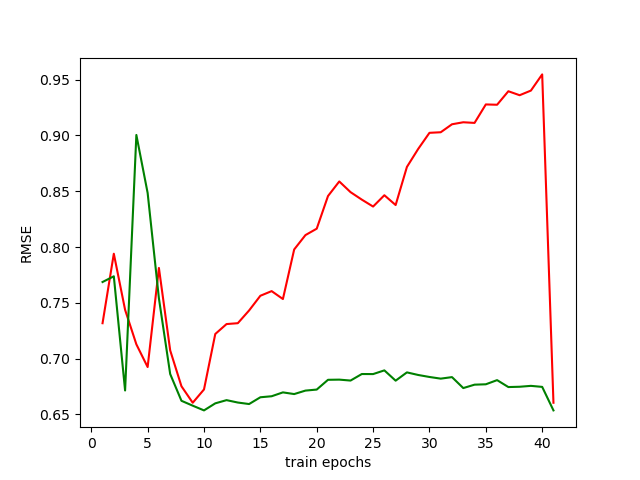} 
\caption{Green lines are RoTHP, red lines are THP. The experiment is for the financial transaction dataset} 
\label{Fig.future} 
\end{figure}
\iffalse
\subsection{Extension property}
In this subsection we discuss the extension property of RoTHP. In NLP, RoPE has the extension property, which means that it can deal with longer sequences. Thus in the training process, we may use shorter sequence for training and longer sequence for testing. The result is in Table \ref{Fig.extend}. We can see that the RoTHP outperforms THP in this case, which implies that RoTHP can extend the property of the Hawkes process from short sequence to longer sequence better than THP

\begin{figure}[H] 
\centering 
\includegraphics[width=0.3\textwidth]{} 
\caption{Green lines are RoTHP, red lines are THP. The experiment is for the stackoverflow dataset} 
\label{Fig.extend} 
\end{figure}
\fi

%\subsection{Limitation}

%Although RoTHP show strong performance in the Hawkes process, for general temporal point process, RoTHP may not be the best choice. We do experiments on the Retweet dataset. The retweet dataset has integer time stamps. We find that (See Figure \ref{Fig.ret}) the log-likelihood of RoTHP is worse than THP while the accuracy is a little higher. The stackoverflow dataset has the similar phenomenon. Hence for the general temporal point process, we need to choose the best model to acheive our goals.
%\begin{figure}[H] 
%\centering 
%\includegraphics[width=0.3\textwidth]{} 
%\includegraphics[width=0.3\textwidth]{} 
%\caption{Green lines are RoTHP, red lines are THP. The experiment is for the retweet dataset} 
%\label{Fig.ret} 
%\end{figure}

\section{Conclusion}
In this paper, we introduce a novel model architecture for the Hawkes process, known as RoTHP, which incorporates the rotary encoding method into the transformer Hawkes process. This results in a model that is both robust and high-performing. We provide an in-depth analysis of why rotary position encoding is more effective for the Hawkes process, examining the loss function and multi-head self-attention mechanism in detail. When tested on three long temporal sequence datasets - Synthetic, financial transaction, and StackOverflow - our model outperformed other models such as RMTPP, THP, NHP, and SAHP. Furthermore, RoTHP demonstrated impressive performance even under conditions of time stamp translation and Gaussian noise.

%Bibliography
\bibliographystyle{unsrt}  
\bibliography{references}  

\end{document}